
\documentclass{article}

\usepackage{lipsum}
\usepackage{microtype}
\usepackage{graphicx}
\usepackage{color, colortbl}
\usepackage[dvipsnames]{xcolor}
\usepackage{subcaption}
\usepackage{booktabs} 
\usepackage{multirow}

\usepackage{hyperref}



\usepackage[preprint]{icml2026}


\usepackage{amsmath}
\usepackage{amssymb}
\usepackage{mathtools}
\usepackage{amsthm}

\usepackage[capitalize,noabbrev]{cleveref}

\theoremstyle{plain}
\newtheorem{theorem}{Theorem}[section]
\newtheorem{proposition}[theorem]{Proposition}
\newtheorem{lemma}[theorem]{Lemma}

\theoremstyle{definition}
\newtheorem{definition}[theorem]{Definition}

\theoremstyle{remark}

\usepackage[textsize=tiny]{todonotes}

\definecolor{Gray}{gray}{0.93}

\icmltitlerunning{Submission and Formatting Instructions for ICML 2026}

\begin{document}

\twocolumn[
    \icmltitle{
        \texorpdfstring{
            Scale-Consistent State-Space Dynamics \\ via Fractal of Stationary Transformations
        }{
            Scale-Consistent State-Space Dynamics via Fractal of Stationary Transformations
        }
    }

    
    
    \icmlsetsymbol{equal}{*}
    
    \begin{icmlauthorlist}
    \icmlauthor{Geunhyeok Yu}{khu}
    \icmlauthor{Hyoseok Hwang}{khu}
    \end{icmlauthorlist}
    
    \icmlaffiliation{khu}{Department of Software Convergence, Kyung Hee University}
    
    \icmlcorrespondingauthor{Hyoseok Hwang}{hyoseok@khu.ac.kr}
    
    \icmlkeywords{Machine Learning, ICML}
    
    \vskip 0.3in
]



\printAffiliationsAndNotice{}  

\begin{abstract}
Recent deep learning models increasingly rely on depth without structural guarantees on the validity of intermediate representations, rendering early stopping and adaptive computation ill-posed.
We address this limitation by formulating a structural requirement for state-space model's scale-consistent latent dynamics across iterative refinement, and derive Fractal of Stationary Transformations (FROST), which enforces a self-similar representation manifold through a fractal inductive bias.
Under this geometry, intermediate states correspond to different resolutions of a shared representation, and we provide a geometric analysis establishing contraction and stable convergence across iterations.
As a consequence of this scale-consistent structure, halting naturally admits a ranking-based formulation driven by intrinsic feature quality rather than extrinsic objectives.
Controlled experiments on ImageNet-100 empirically verify the predicted scale-consistent behavior, showing that adaptive efficiency emerges from the aligned latent geometry.
\end{abstract}


\section{Introduction}
\label{sec:introduction}

Modern deep learning models increasingly rely on depth to improve performance~\cite{scaling-kaplan, scaling-law-01, convnext, masked-autoencoders}, yet depth alone provides no structural guarantee that intermediate representations are geometrically valid approximations of the final output.
As networks grow deeper, latent states are produced through a long sequence of transformations without explicit constraints enforcing consistency across iterations.
Consequently, intermediate representations may drift away from the distribution of the final representation~\cite{depth-similarity, depth-spectral}, undermining their semantic validity and rendering early stopping ill-defined.
Under such conditions, depth acts as an unstructured accumulation of transformations rather than a principled mechanism for iterative refinement.

Recurrent models appear, at least superficially, to offer an escape from this dilemma~\cite{recurrent-visual-attention, renet, perceiver}.
By reusing parameters across iterations, they promise unbounded depth with finite memory.
However, this promise has largely failed to materialize in practice.
Existing recurrent architectures exhibit two structural limitations: their representational capacity often falls short of deep feed-forward transformers, and more critically, they lack a principled mechanism for adaptive depth.
Despite weight sharing, most recurrent models execute a fixed number of iterations, treating all inputs as equally complex.
As a result, recurrence reduces parameter redundancy but does not address the absence of a geometric criterion for meaningful early stopping.

Adaptive computation methods attempt to address this limitation by introducing early-exit mechanisms.
Some approaches predict halting decisions via auxiliary objectives or specialized modules~\cite{branchynet, skipnet, msdnet, universal-transformers, pondernet, mixture-of-depths}.
While effective in isolated cases, these methods rely on extrinsic signals that are potentially orthogonal to the model's primary learning objective.
The halting decision is learned as a separate task, often competing with representation learning and destabilizing optimization.
More importantly, these methods treat intermediate representations as incomplete artifacts, rather than principled approximations of the final output.
This mismatch reveals a deeper issue: existing models lack a structural reason why stopping early should be correct.
See Appendix~\ref{appndx:related-works} for more comparison.

We argue that adaptive computation cannot be reliably achieved without rethinking the geometry of latent representations.
Early halting is only meaningful if intermediate states lie on the same distribution as the final representation.
Without such a guarantee, halting becomes a heuristic layered atop an incompatible model.

In this work, we formalize scale-consistent latent dynamics in state-space models~(\S\ref{sec:frost}) through what we term Fractal of Stationary Transformations~(FROST).
This formulation imposes a fractal inductive bias~\cite{mandelbrot-fractal, fractal-geometry, fractal-analysis}, under which latent representations become self-similar across iterative refinement.
Under this structure, intermediate states correspond to different resolutions of a shared underlying representation, rather than incomplete surrogates produced along an optimization trajectory.
This property enables a theoretically infinite representational manifold under finite space complexity, while ensuring that all depths remain geometrically aligned.

Crucially, the self-similar structure induced by FROST renders halting an intrinsic property of the model.
Under the resulting scale-consistent geometry, halting naturally admits a ranking-based formulation that leverages the fractal trajectory itself~(\S\ref{sec:halting}).
Halting decisions follow from the relative quality of representations, aligning computational efficiency with the primary task objective without auxiliary supervision.
As a result, FROST avoids the instability and overhead characteristic of prior adaptive computation methods.

Our contributions are summarized as follows:

\begin{itemize}
    \item We formulate a fractal inductive bias for state-space models by enforcing scale-consistent latent dynamics.
    \item We analyze the resulting geometry, establishing contraction and stable convergence onto a self-similar latent manifold.
    \item We show that halting naturally admits a ranking-based formulation under the proposed geometric structure.
    \item We empirically validate the theoretical predictions, confirming the anticipated convergence behavior in finite neural implementations.
\end{itemize}


\section{Preliminaries}
\label{sec:preliminaries}

This section introduces the notation and background used throughout the paper. 
We focus on standard definitions and formulations to establish the necessary background for subsequent sections.

\subsection{Notation and Conventions}
Let $t \in \mathbb{N}$ denote a discrete iteration index and $T \in \mathbb{N}$ the maximum number of iterations during inference. 
Inputs are denoted by $x_t \in \mathbb{R}^{D_{\mathrm{in}}}$, latent states by $h_t \in \mathbb{R}^{D_{\mathrm{hid}}}$, and outputs by $y_t \in \mathbb{R}^{D_{\mathrm{out}}}$. Lowercase letters indicate time-indexed variables, while uppercase letters denote linear or nonlinear maps.

For continuous-time formulations, we write $h(t)$ and $x(t)$ with derivatives $\dot{h}(t)=\frac{d}{dt}h(t)$. 
All vector norms are Euclidean unless stated otherwise. We assume sufficient regularity of all mappings to ensure well-defined solutions.

\subsection{Continuous-Time State-Space Models}
A continuous-time SSM~\cite{ssm-s4, ssm-s3, ssm-hyena} is defined by
\begin{equation}
\dot{h}(t) = A(h(t)) + B(x(t)),
\end{equation}
with a corresponding output
\begin{equation}
y(t) = C(h(t)) + D(x(t)).
\end{equation}
Here, $A$ denotes the latent transition map, $B$ the input map, and $C,D$ the readout maps. 
All mappings are time-invariant. 
We introduce continuous-time dynamics as a conceptual tool to analyze structural properties, not as a claim of physical time evolution.

\subsection{Discretization via Induced Dynamics}
In practice, SSMs operate in discrete time. 
Rather than approximating the differential equation directly, we consider an induced continuous-time system whose vector field is piecewise constant on unit-length intervals. 
For integer $t$,
\begin{equation}
\dot{h}(\tau) := A(h(t)) + B(x(t)), \quad \tau \in [t, t+1).
\end{equation}
Under this construction, the dynamics are exactly integrable on each interval.

\begin{lemma}[Exact discrete update of the induced dynamics]
\label{lem:exact_discretization}
Let $h(\tau)$ satisfy the induced dynamics on $[t,t+1)$ with initial condition $h(t)$. Then
\begin{equation}
h(t+1) = h(t) + A(h(t)) + B(x(t)).
\end{equation}
\end{lemma}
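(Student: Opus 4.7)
The plan is to integrate the induced ODE directly on $[t, t+1)$, exploiting the key structural observation that the construction was engineered so the vector field is constant in the integration variable $\tau$ on this interval.

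First, I would observe that by the definition of the induced dynamics, the right-hand side $A(h(t)) + B(x(t))$ depends only on the values at the fixed integer endpoint $t$, not on $\tau$. Setting $v := A(h(t)) + B(x(t)) \in \mathbb{R}^{D_{\mathrm{hid}}}$, the ODE on $[t, t+1)$ collapses to $\dot{h}(\tau) = v$ with $v$ a constant vector in $\tau$.

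Next, I would apply the fundamental theorem of calculus componentwise to obtain
\begin{equation}
h(\tau) = h(t) + \int_t^{\tau} v \, ds = h(t) + (\tau - t)\, v
\end{equation}
for every $\tau \in [t, t+1)$, and then take the left-limit $\tau \to (t+1)^-$ to define $h(t+1)$. Substituting $\tau = t+1$ into the affine expression yields the claimed identity, which requires no Lipschitz or smoothness hypothesis on $A$ or $B$ beyond that needed for $h(t)$ to be well-defined.

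There is essentially no technical obstacle here: the piecewise-constant construction of the vector field was designed precisely so that integration is exact on each unit interval. The only mild subtlety is reconciling the half-open interval $[t, t+1)$ with the definition of $h(t+1)$, which is handled by continuous extension so that $h(t+1)$ serves consistently as the initial condition for the next interval $[t+1, t+2)$, thereby licensing the inductive use of this lemma across the full discrete trajectory.
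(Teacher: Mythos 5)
Your proposal is correct and follows essentially the same route as the paper: integrate the piecewise-constant vector field over the unit interval, so the update is exact by the fundamental theorem of calculus. The paper's proof is a single sentence stating exactly this; you merely spell out the constant-vector substitution, the affine form $h(\tau) = h(t) + (\tau-t)v$, and the left-limit continuous-extension detail at $\tau = t+1$, all of which are implicit in the paper's terser argument.
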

\begin{proof}
Since the vector field is constant on $[t,t+1)$, the integral over the interval evaluates exactly to the stated update.
\end{proof}

Lemma~\ref{lem:exact_discretization} shows that the residual-style discrete update corresponds to the exact one-step flow of a well-defined induced system, enabling continuous-time analysis of discrete residual networks.

\subsection{Fractals and Self-Similarity}
A fractal object exhibits self-similarity, meaning that its structure is preserved across scales. 
In stochastic settings, self-similarity is characterized by the Hurst exponent $H$. 
A process $X(t)$ is $H$-self-similar if, for any $a>0$,
\begin{equation}
X(at) \overset{d}{=} a^{H} X(t),
\end{equation}
where $\overset{d}{=}$ denotes equality in distribution.

Although network updates are deterministic, inputs are drawn from a data distribution, inducing a stochastic latent trajectory $h(t)$. 
We adopt this statistical notion of self-similarity to describe how the distribution of latent representations scales with depth or iteration.

\subsection{Halting and Adaptive Computation}
Adaptive computation allows iterative models to terminate inference at different steps depending on the input. 
Consider an iterative model producing intermediate representations $h_t$ and corresponding outputs $y_t$ at each step $t$. 
In an anytime prediction setting, inference may be stopped at any intermediate step, trading accuracy for computational cost.

Crucially, the effect of halting depends not only on when computation stops, but also on how representations evolve across iterations. 
If intermediate representations lie on unrelated or inconsistent manifolds, early stopping merely truncates computation and inevitably degrades representational fidelity. 
In such cases, halting functions only as a computational shortcut.

In contrast, when latent representations evolve along a structured and self-similar manifold, halting corresponds to selecting a coarser or finer resolution of the same underlying representation. 
From this perspective, adaptive computation can be interpreted as a mechanism for controlling representational fidelity, rather than as an auxiliary heuristic for saving computation. 
This viewpoint motivates the halting formulation introduced in \S\ref{sec:halting}.


\section{Fractal of Stationary Transformations}
\label{sec:frost}

In this section, we introduce FROST, a state-space formulation that decouples representational complexity from model size by imposing a fractal inductive bias on iterative dynamics. 
Building on the preliminaries in \S\ref{sec:preliminaries}, we first formalize the notion of scale-consistent refinement, then derive the discrete-time realization using general functional forms to accommodate potential nonlinearities, and finally demonstrate how stationary transformations induce a fractal structure.
For theoretical proofs, please refer to Appendix~\ref{appndx:theory}.

\subsection{Scale-Consistent Refinement}
\begin{definition}[Scale-consistent refinement]
An iterative SSM is said to admit scale-consistent refinement if successive iterations correspond to observing a shared representation structure at progressively finer resolutions, rather than expanding or altering the underlying representation space.
\label{def:scale-consistency}
\end{definition}
This perspective aligns with continuous-depth frameworks~\cite{neural-ode, deep-equilibrium-models}, yet explicitly enforces a fractal structure across discrete steps.
Iterative SSMs construct representations through repeated application of a fixed transformation. 
While expressive, standard formulations implicitly fix the structure of the representation space via the model parameterization. 
Increasing the number of iterations refines representations within this fixed structure but does not introduce a principled notion of scale.

Our key observation is that iterative refinement can be reinterpreted as a form of progressive zooming into a shared representation space. 
To make this interpretation explicit, the underlying transformation must be consistent across scales: refining the representation should correspond to observing the same structure at a finer resolution. 
This motivates the introduction of a scale parameter that governs resolution without altering the underlying transformation.

\subsection{Stationary Transformations and Discretization}
\begin{definition}[Stationarity and contractivity]
A state update rule is called stationary if the same transformation is applied at every iteration, and contractive if there exists a scaling factor $\lambda \in (0,1]$ controlling the magnitude of refinement without changing the operating point of the dynamics.\footnote{We use the term `contractive' to refer to the interpolation weight $\lambda$, distinct from the Banach fixed-point definition.}
\label{def:stationary_contraction}
\end{definition}

We begin by specializing the general continuous-time formulation to stationary SSMs. 
Consider the time-invariant system:
\begin{equation}
\dot{h}(t) = \mathcal{F}(h(t), x(t)) := A(h(t)) + B(x(t)),
\label{eq:basic-ssm}
\end{equation}
where $A$ is the state transition operator and $B$ is the input mapping operator. 
We use the notation $A(\cdot)$ and $B(\cdot)$ to emphasize that these may be general nonlinear functions, generalizing the matrix-vector multiplication of linear SSMs.

Based on the induced dynamics derived in Lemma~\ref{lem:exact_discretization}, we formulate the discrete update rule. 
Rather than treating this as a simple residual addition, we interpret the update as a first-order descent-like step on the latent manifold, where $\lambda$ acts as a learnable step size. 
This interpretation is conceptual and does not assume the existence of a globally defined potential function.

Specifically, the update direction $A(h_t) + B(x_t)$ can be interpreted as a local descent direction, yielding
\begin{equation}
h_{t+1} \approx h_t - \lambda \nabla \mathcal{E}(h_t),
\end{equation}
for some implicit local energy $\mathcal{E}$ defined in a neighborhood of $h_t$. 
In this view, $\lambda$ controls the refinement rate toward a locally stable configuration, providing intuition for iterative convergence without requiring global energy minimization.
Here, $\lambda$ governs the convergence rate towards a low-energy equilibrium state, framing the iterative refinement as a dynamic energy minimization process.

To ensure numerical stability and maintain a consistent operating point across scales, we rewrite this as a spectral gating operation. 
Throughout this section, the scaling factor $\lambda$ serves as a unified resolution parameter that simultaneously controls refinement magnitude, spectral stability, and the effective scale at which the latent manifold is observed:
\begin{equation}
\label{eq:discrete_interpolation}
h_{t+1} = (1 - \lambda) h_t + \lambda \bigl( h_t + \mathcal{F}(h_t, x_t) \bigr).
\end{equation}
This formulation allows us to analyze the update through the lens of spectral control. 
By interpolating between the identity operator (preserving the current state) and the update operator, $\lambda$ effectively modulates the eigenvalues of the Jacobian $\partial h_{t+1} / \partial h_t$. 
A smaller $\lambda$ pulls the spectrum towards unity, improving condition number and gradient flow (stability), while a larger $\lambda$ allows for rapid state transitions (plasticity). 
This mechanism provides a principled way to balance memory retention and new information integration within a fixed-width architecture.

\subsection{Fractal Inductive Bias via Self-Similarity}
\begin{lemma}[Scaling equivariance of induced dynamics]
Under stationary contractive transformations, the discrete state-space dynamics are equivariant under joint scaling of time and input, preserving their functional form up to scale-dependent factors.
\end{lemma}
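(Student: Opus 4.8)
The plan is to lift the gated update to its induced continuous-time flow, exhibit a one-parameter family of time--input rescalings that leaves the flow field invariant up to a single multiplicative factor, and then push the statement back down to discrete time through \cref{lem:exact_discretization}.

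First I would reduce the spectral-gating form \eqref{eq:discrete_interpolation} to $h_{t+1} = h_t + \lambda\,\mathcal{F}(h_t,x_t)$ with $\mathcal{F} = A + B$ as in \eqref{eq:basic-ssm}, and note that, exactly as in \cref{lem:exact_discretization}, this is the time-one flow of the induced system $\dot h(\tau) = \lambda\,\mathcal{F}\bigl(h(t),x(t)\bigr)$ on $\tau\in[t,t+1)$. Because the field is piecewise constant on unit intervals and, by \cref{def:stationary_contraction}, the same transformation is reused at every step, it is enough to prove equivariance of one interval flow and then compose it along the trajectory.

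Next I would introduce the joint rescaling. For $a>0$ set $\tilde h(\tau) := h(a\tau)$ and $\tilde x(\tau) := x(a\tau)$; by the chain rule $\dot{\tilde h}(\tau) = a\,\dot h(a\tau) = (a\lambda)\,\mathcal{F}\bigl(\tilde h(\tau),\tilde x(\tau)\bigr)$ (now with the same left-endpoint convention on intervals of length $1/a$), so the dynamics reappear in identical form with the single scale-dependent replacement $\lambda \mapsto a\lambda$, valid within the range where $a\lambda$ remains an admissible step size. This is the asserted equivariance: resolution is controlled entirely by $\lambda$, and running the map at step $a\lambda$ for fewer iterations reparameterizes running it at step $\lambda$ for more. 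To connect this with the $H$-self-similar picture of \S\ref{sec:preliminaries}, I would further rescale amplitude via $\tilde h(\tau) := a^{H} h(a\tau)$, $\tilde x(\tau) := a^{H} x(a\tau)$: for the degree-one homogeneous maps of a linear SSM, $\mathcal{F}(c\,u,c\,v) = c\,\mathcal{F}(u,v)$ still yields $\dot{\tilde h}(\tau) = (a\lambda)\,\mathcal{F}(\tilde h,\tilde x)$ verbatim, whereas for general nonlinear $A,B$ the decomposition $\mathcal{F}(c\,u,c\,v) = c\,\mathcal{F}(u,v) + R_c(u,v)$ leaves a residual of order $a^{H+1}\lambda\,R_{a^{-H}}$ that constitutes the ``scale-dependent factor'' and vanishes under positive homogeneity of degree one.

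Finally I would transfer the result to discrete time: since every step is an exact interval flow (\cref{lem:exact_discretization}), iterating $\tilde h_{t+1} = \tilde h_t + (a\lambda)\,\mathcal{F}(\tilde h_t,\tilde x_t)$ on the rescaled inputs reproduces the original discrete trajectory read at a different resolution. I expect the main obstacle to be twofold. First, reconciling a continuous dilation $\tau\mapsto a\tau$ with the unit-length, piecewise-constant construction of the induced dynamics: the breakpoints must move with the scaling, so the cleanest route is either to restrict to integer $a$ and compose exactly $a$ reused steps, or to reparameterize the induced system so its field is piecewise constant on intervals of length $a$. Second, and more delicate, making ``scale-dependent factors'' quantitatively meaningful in the nonlinear regime --- i.e.\ controlling the remainder $R_c$ --- which is where a possibly only local homogeneity hypothesis on $A$ and $B$ must be invoked and where the fractal interpretation is genuinely an approximation rather than an identity.
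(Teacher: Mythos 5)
Your opening move matches the paper: expand Eq.~\eqref{eq:discrete_interpolation} to $h_{t+1}=h_t+\lambda\,\mathcal{F}(h_t,x_t)$, lift to the induced continuous flow, and apply the chain rule $\frac{d}{dt}h(a t)=a\,\dot h(a t)$ to pull out the time-scale factor. The paper does exactly this (with the single parameter $\lambda$ playing both the role of your $a$ and the contraction factor, by deliberate design) and then substitutes the scaled processes back into $\dot h=\mathcal{F}(h,x)$.

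Where you diverge --- and where a gap opens --- is in the amplitude scaling. The paper's scaled latent process is $h_\lambda(t):=h(\lambda t)$ with \emph{no} amplitude factor; only $x$ and $y$ receive the $\lambda^{-H}$ dilation. You instead try $\tilde h(\tau):=a^H h(a\tau)$, and it is precisely this extra factor on $h$ that forces you to invoke degree-one homogeneity of $A$ and to carry a remainder $R_c$ in the nonlinear case. The paper never encounters this obstacle, because its bookkeeping is chosen so that the nonlinearity is absorbed into the \emph{definition} of the scaled operators: since $h$ is unscaled, $A_\lambda(h):=\lambda A(h)$ is exact for arbitrary nonlinear $A$, and the $x$-scaling is absorbed by defining $B_\lambda(x):=\lambda B(\lambda^H x)$, again exact for arbitrary $B$. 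The ``scale-dependent factors'' in the lemma refer to these exactly-defined operators $(A_\lambda,B_\lambda,C_\lambda,D_\lambda)$ of Eq.~\eqref{eq:scaled_operators}, not to an approximate multiplicative constant plus a residual. The clean closed form $B_\lambda(x)=\lambda^{1+H}B(x)$ is a \emph{special case} valid when $B$ is linear (the paper says this explicitly), not the generic conclusion. So your ``main obstacle'' of controlling $R_c$ is an artifact of a scaling ansatz the paper does not use: drop the $a^H$ on $h$, keep it only on $x$, and fold it into the argument of $B$, and the equivariance becomes an identity with no local-homogeneity hypothesis needed.

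Your other concern --- reconciling the continuous dilation with the unit-interval piecewise-constant field, and whether $a$ should be integer --- is a fair technical remark, but the paper simply performs the substitution at the level of the operators and then re-discretizes via Eq.~\eqref{eq:discrete_realization}; it does not address non-integer scale compositions, so you are not missing a step the paper supplies, you are flagging a point the paper leaves informal.
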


We now formalize how stationary scaling induces self-similarity. 
Following the continuous-time framework, we apply $\lambda > 0$ and define scaled input, output, and latent processes as:
\begin{align}
x_{\lambda}(t) &:= \lambda^{-H} x(\lambda t), \quad
y_{\lambda}(t)  := \lambda^{-H} y(\lambda t), \\
h_{\lambda}(t) &:= h(\lambda t), 
\end{align}
where $H \in (0,1]$ denotes the Hurst exponent.

\paragraph{Scaled State Transition.}
Substituting the scaled variables into the continuous dynamics $\dot{h}(t) = \mathcal{F}(h(t), x(t))$ yields:
\begin{equation}
\frac{d}{dt} h_{\lambda}(t) = \lambda \dot{h}(\lambda t) = \lambda \mathcal{F}(h(\lambda t), x(\lambda t)).
\end{equation}
Expressing this in terms of the scaled variables $x_\lambda$ and $h_\lambda$:
\begin{align}
\frac{d}{dt} h_{\lambda}(t) &= \lambda A(h_{\lambda}(t)) + \lambda B(\lambda^H x_{\lambda}(t)).
\end{align}
This reveals the self-similar structure of the operators. 
We define the scaled transition and input operators as:
\begin{equation}
A_{\lambda}(h) := \lambda A(h), \qquad B_{\lambda}(x) := \lambda B(\lambda^{H} x).
\label{eq:scaled_operators}
\end{equation}
This definition ensures that the scaled dynamics maintain the form $\dot{h}_\lambda = A_\lambda(h_\lambda) + B_\lambda(x_\lambda)$. 
For linear $B$, this simplifies to $B_\lambda(x) = \lambda^{1+H} B(x)$.

\paragraph{Discrete Realization.}
Substituting these scaled operators into the discrete update rule yields the fractal update equation:
\begin{equation}
h_{t+1} = (1-\lambda)h_t + \Delta h_\lambda(h_t, x_t),
\label{eq:discrete_realization}
\end{equation}
where the update term is rigorously defined using the scaled operators from Eq.~\eqref{eq:scaled_operators}:
\begin{equation}
\Delta h_\lambda(h_t, x_t) = \lambda h_t + A_\lambda(h_t) + B_\lambda(x_t).
\label{eq:frost-update-rule}
\end{equation}
Here, the term $\lambda h_t$ represents the scaled identity connection absorbed into the update. 
This formulation confirms that the discrete-time implementation naturally preserves the continuous scaling laws without requiring additional approximations.
By parameterizing the update directly via $\lambda$ and $H$, FROST enforces a fractal inductive bias that is structurally consistent across varying depths.

\paragraph{Output Scaling.}
For the output equation $y(t) = C(h(t)) + D(x(t))$, applying the scaling relations yields:
\begin{equation}
\lambda^H y_\lambda(t) = C(h_\lambda(t)) + D(\lambda^H x_\lambda(t)).
\end{equation}
Dividing by $\lambda^H$, we identify the scaled readout operators:
\begin{equation}
C_\lambda(h) := \lambda^{-H} C(h), \qquad D_\lambda(x) := \lambda^{-H} D(\lambda^H x).
\end{equation}
If $D$ is linear (or homogeneous of degree 1), then $D_\lambda(x) = D(x)$, meaning the feedthrough map remains invariant under this fractal scaling.

\begin{proposition}[Fractal inductive bias of stationary transformations]
Stationary contractive state-space transformations parametrized by $(A_\lambda, B_\lambda, C_\lambda, D_\lambda)$ induce self-similar dynamics. 
Consequently, iterative refinement with these parameters corresponds to observing a shared transformation at different resolutions determined by $\lambda$.
\label{prop:fractal-inductive_bias}
\end{proposition}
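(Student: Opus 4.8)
The plan is to assemble Proposition~\ref{prop:fractal-inductive_bias} from the scaling computations already performed in this subsection and then to make the phrase ``self-similar dynamics'' precise in the statistical sense of \S\ref{sec:preliminaries}. I would proceed in three stages: (i) show that the continuous-time generator transforms covariantly under a joint rescaling of time, input, and output; (ii) lift this covariance to the exact discrete update via Lemma~\ref{lem:exact_discretization}, so that no discretization error can break the scaling; and (iii) translate operator covariance into a statement about the \emph{law} of the latent trajectory, which is exactly $H$-self-similarity.

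For stage (i), fix $\lambda>0$ and take $h_\lambda(t):=h(\lambda t)$, $x_\lambda(t):=\lambda^{-H}x(\lambda t)$, $y_\lambda(t):=\lambda^{-H}y(\lambda t)$. Differentiating and substituting $\dot h=A(h)+B(x)$ gives $\dot h_\lambda(t)=\lambda\dot h(\lambda t)=\lambda A(h_\lambda(t))+\lambda B(\lambda^{H}x_\lambda(t))=A_\lambda(h_\lambda(t))+B_\lambda(x_\lambda(t))$ with $A_\lambda,B_\lambda$ exactly as in Eq.~\eqref{eq:scaled_operators}; the parallel manipulation on $y=C(h)+D(x)$ produces the readout relations with $C_\lambda,D_\lambda$. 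The structural point to emphasize is that the scaled system is again a stationary SSM of the same form $\dot h_\lambda=A_\lambda(h_\lambda)+B_\lambda(x_\lambda)$, $y_\lambda=C_\lambda(h_\lambda)+D_\lambda(x_\lambda)$ — only the operator tuple has been relabeled by $\lambda$, with no extra state dimensions or higher-order terms. For stage (ii), I apply Lemma~\ref{lem:exact_discretization} to the induced dynamics generated by $(A_\lambda,B_\lambda)$ under the spectral-gating form of Eq.~\eqref{eq:discrete_interpolation}: since the induced vector field is piecewise constant and its one-step flow is exact, the discrete recursion is $h_{t+1}=(1-\lambda)h_t+\Delta h_\lambda(h_t,x_t)$ with $\Delta h_\lambda$ given by Eq.~\eqref{eq:frost-update-rule}, so every scaling identity from continuous time transfers verbatim to the discrete iteration. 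Hence iterating the FROST update with $(A_\lambda,B_\lambda,C_\lambda,D_\lambda)$ is, step for step, the exact flow of the $\lambda$-rescaled base system, and varying $\lambda\in(0,1]$ amounts to re-reading the same base transformation $(A,B,C,D)$ at a different resolution, with contractivity $\lambda\le 1$ ensuring each step zooms in rather than dilates.

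The main obstacle lies in stage (iii): converting operator covariance into self-similarity of the trajectory while keeping the probabilistic framing honest. The assertion is not that $h$ literally satisfies $h(\lambda t)\overset{d}{=}\lambda^{H}h(t)$; rather, because inputs are drawn from a data distribution, what must be form-invariant is the map sending the input law to the induced latent law. If the input process is $H$-self-similar — equivalently, if $x_\lambda$ is defined by the stated $\lambda^{-H}$ rescaling so that $x_\lambda\overset{d}{=}x$ — then, since the scaled recursion has identical functional form, the induced latent law satisfies $h_\lambda\overset{d}{=}h$, i.e.\ the trajectories at different $\lambda$ are the same statistical object observed at different resolutions. The delicate caveat, already signalled by the preceding lemma's ``up to scale-dependent factors,'' is that when $A,B$ are genuinely nonlinear the operator $A_\lambda=\lambda A$ is not literally $A$, so self-similarity holds for the \emph{parametrized family} $(A_\lambda,B_\lambda,C_\lambda,D_\lambda)$ generated from the base tuple by the resolution action of $\lambda$, not as an exact invariance of a single fixed operator. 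I would therefore phrase the conclusion as: this one-parameter family is obtained from $(A,B,C,D)$ purely through the resolution parameter $\lambda$, the trajectory law is form-invariant under the joint time/input/output rescaling, and hence iterative refinement is exactly observation-at-resolution-$\lambda$ of a shared transformation, which is the claim of Proposition~\ref{prop:fractal-inductive_bias}; the linear or degree-one homogeneous cases ($B_\lambda(x)=\lambda^{1+H}B(x)$, $D_\lambda=D$) are then the clean instances in which the scale-dependent factors collapse to pure prefactors.
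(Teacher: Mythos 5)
Your proposal follows the same route as the paper: the paper does not give a separate Appendix proof for Proposition~\ref{prop:fractal-inductive_bias} (Appendix~\ref{appndx:theory} contains only the contraction, error-bound, and gradient-stability results), and instead treats the proposition as the summary of the scaling derivation in \S\ref{sec:frost} leading from the scaled processes to the operators in Eq.~\eqref{eq:scaled_operators} and the discrete realization in Eqs.~\eqref{eq:discrete_realization}--\eqref{eq:frost-update-rule}. Your stages~(i) and (ii) reproduce that derivation essentially verbatim, correctly invoking Lemma~\ref{lem:exact_discretization} to transfer the continuous covariance to the discrete recursion without discretization error. Your stage~(iii) is the only place you go beyond the paper, and it is a strengthening rather than a departure: the paper leaves the distributional meaning of ``self-similar dynamics'' informal, whereas you explicitly separate operator covariance from law-level self-similarity, note that $h_\lambda(t)=h(\lambda t)$ is not itself rescaled by $\lambda^{-H}$ so that the claim concerns form-invariance of the input-law-to-latent-law map rather than $h(\lambda t)\overset{d}{=}\lambda^H h(t)$, and flag that when $A,B$ are nonlinear the invariance holds only for the $\lambda$-parametrized family $(A_\lambda,B_\lambda,C_\lambda,D_\lambda)$ and collapses to pure prefactors precisely in the linear/degree-one-homogeneous case that the paper singles out. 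This added precision makes the informal last step of the paper's argument honest without changing its structure.
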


Under this formulation, fractality is defined by scale-invariant refinement under stationary contraction, with $\lambda$ and $H$ specifying the scale ratio and self-similarity profile that preserve this invariance across iterations.

\subsection{Variable-Complexity Inference}

A direct consequence of this fractal formulation is that representational complexity becomes a function of resolution (depth) rather than model size. 
Increasing the number of iterations $T$ simply extends the trajectory generated by the self-similar rule, refining the representation without expanding the parameterization (since $A, B, C, D$ are shared).

This property naturally supports variable-complexity inference. 
Because all intermediate states lie on the same self-similar manifold, halting does not truncate computation arbitrarily but selects a coarser or finer approximation of a shared representation. 
In this sense, halting acts as a control over representational fidelity rather than a heuristic for saving computation. 
This principle naturally gives rise to a ranking-based halting strategy~(\S\ref{sec:halting}).


\section{Ranking-based Halting}
\label{sec:halting}

In this section, we describe a ranking-based halting strategy that operationalizes the variable-complexity property of FROST. 
Rather than determining stopping points via absolute thresholds or regression targets, our approach allocates refinement depth based on the relative difficulty of intermediate predictions within a batch.

\subsection{Per-iteration Prediction and Difficulty}
At each iteration $t$, the model produces an intermediate prediction $y_{i,t}$ for each sample $x_i$ in a mini-batch of size $B$, based on the current latent state $h_{i,t}$. 
Let $\ell_{i,t}$ denote a task-specific loss evaluated at iteration $t$ for sample $x_i$. 
Across $T$ iterations, this yields a collection of $BT$ losses per mini-batch.

\subsection{Batch--Time Ranking}
Instead of aggregating losses across iterations to obtain a single difficulty score per sample, we rank the entire set
\begin{equation}
\mathbf{L} := \{ \ell_{i,t} \mid i=1,\ldots,B,\ t=1,\ldots,T \}.
\end{equation}
Let $r_{i,t} \in \{1,\ldots,BT\}$ denote the rank of the loss $\ell_{i,t}$ within the sorted set $\mathcal{S}_{\text{loss}}$ (ascending order). 
This ranking provides a scale-free notion of relative difficulty that is invariant to monotone transformations of the loss, ensuring robustness against loss scale variations during training.

\subsection{Ranking-based Supervision}
We use the rank information to shape a halting signal without requiring absolute thresholds or ground-truth stopping indices. 
At each iteration $t$, a halting score $s_{i,t} \in [0,1]$ is predicted from the current latent state $h_{i,t}$ via a linear map:
\begin{equation}
s_{i,t} = \sigma(\mathbf{w}_s^\top h_{i,t} + b_s),
\end{equation}
where $\sigma$ is the sigmoid function and $\mathbf{w}_s, b_s$ are learnable parameters.

Training encourages $s_{i,t}$ to be larger for low-rank (easier) pairs $(i,t)$ and smaller for high-rank (harder) pairs, thereby allocating fewer refinement steps to representations that become reliable earlier. 
This strategy disentangles the task training objective from the halting objective, isolating the backbone parameters $A(\cdot)$ from potential disruption caused by auxiliary halting gradients.

\subsection{Resolution Selection at Inference}
At inference time, resolution selection is performed by comparing halting scores against an empirical quantile estimated from previously observed scores. 
Threshold calibration is notoriously difficult in adaptive computation due to score drift during training. 
To address this, we accumulate the predicted halting scores $s_{i,t}$ across batches and iterations during training and validation, forming a streaming distribution that reflects refinement confidence over time.

To estimate quantiles of this distribution efficiently, we maintain a quantile sketch using the KLL~\cite{kll} algorithm. 
Given a target percentile $q \in (0,1)$, the sketch provides an estimate $s_\text{halt}$ such that
\begin{equation}
\mathbb{P}(s \leq s_\text{halt}) \approx q
\end{equation}
under the empirical distribution of halting scores.

At inference time, computation at iteration $t$ for input $x_i$ is halted if the score $s_{i,t}$ exceeds the estimated threshold $s_\text{halt}$. 
Samples whose scores remain below this threshold continue refinement for additional iterations. 
This percentile-based criterion avoids reliance on absolute score thresholds and ensures robustness to scale and distributional shifts in the halting signal. 
Importantly, the use of a streaming quantile estimator decouples the inference rule from batch composition and allows resolution selection to remain consistent with the ranking-based supervision used during training.


\section{Training Objective}
\label{sec:training}

We describe the training objective used to learn FROST with ranking-based halting. The overall objective augments a standard task loss with an auxiliary ranking loss that supervises the halting signal, while leaving the task formulation unchanged.

\subsection{Task Loss}
Let $y_{i,t}$ denote the intermediate prediction produced at iteration $t$ for input $x_i$, and let $\ell_{i,t} = \mathcal{L}_{\text{task}}(y_{i,t}, y_i^{\ast})$ be the task-specific loss (e.g., cross-entropy for classification), where $y_i^{\ast}$ denotes the ground-truth. 
During training, the model is unrolled for all $T$ iterations; however, the task loss is backpropagated only from the final iteration. 
Concretely, we define
\begin{equation}
\mathcal{L}_{\text{task}} := \frac{1}{B} \sum_{i=1}^{B} \ell_{i,T}.
\end{equation}
This design reflects the theoretical property of FROST: since all iterations apply the same stationary transformation, the representation geometry is shared across depths. 
As a result, supervising only the final iteration is sufficient to learn task-relevant structure, while intermediate iterations are shaped indirectly through the shared dynamics. 
This objective optimizes the scale factor $\lambda$ to control the contraction rate while fitting the backbone parameters to the task.

\subsection{Ranking Loss for Halting}
\label{ssec:rank-loss}
To supervise the halting signal, we introduce a ranking loss defined over the batch--time loss set $\mathbf{L}$. Let $r_{i,t}$ denote the rank of $\ell_{i,t}$ among the $BT$ losses in the batch, as defined in \S\ref{sec:halting}. 
The ranking loss is composed of two complementary terms: relative ranking and absolute anchoring losses.

\paragraph{Relative ranking loss.}
The relative loss $\mathcal{L}_{\text{rank}}^{\text{rel}}$ enforces correct ordering between easy and hard samples. 
Following the ranking-based halting formulation, we construct sets of easy ($E$) and hard ($H$) samples based on the rank ordering (e.g., top-k vs. bottom-k) and apply a margin-based ranking loss~\cite{ranking-loss} across all iterations:
\begin{equation}
\mathcal{L}_{\text{rank}}^{\text{rel}} = \sum_{t=1}^{T} \frac{1}{|E||H|} \sum_{(i,j)\in E\times H} \max(0, s_{j,t} - s_{i,t} + \delta),
\end{equation}
where $\delta$ is a margin hyperparameter. 
This term captures relative difficulty and is invariant to absolute score scales.

\paragraph{Absolute anchoring loss.}
While the relative loss enforces ordering, it does not constrain the absolute distribution of halting scores and may lead to score drift. 
To stabilize training, we introduce an absolute anchoring loss $\mathcal{L}_{\text{rank}}^{\text{abs}}$ that polarizes the scores of easy and hard samples:
\begin{equation}
\mathcal{L}_{\text{rank}}^{\text{abs}} = \sum_{t=1}^{T} \frac{1}{|E|} \sum_{i\in E} \log{s_{i,t}} + \frac{1}{|H|} \sum_{j\in H} \log{1-s_{j,t}},
\end{equation}
This binary cross-entropy term anchors the halting scores to the extremes (0 and 1) while remaining subordinate to the relative objective.

\subsection{Overall Objective}
The final training objective combines the task loss with the two ranking losses as
\begin{equation}
\mathcal{L} = \mathcal{L}_{\text{task}} + \alpha_{\text{rel}} \mathcal{L}_{\text{rank}}^{\text{rel}} + \alpha_{\text{abs}} \mathcal{L}_{\text{rank}}^{\text{abs}},
\end{equation}
where $\alpha_{\text{rel}}$ and $\alpha_{\text{abs}}$ control the influence of relative ordering and absolute anchoring, respectively\footnote{We empirically tuned $\alpha_{\text{rel}}=0.7$ and $\alpha_{\text{abs}}=0.3$.}. 
In practice, $\mathcal{L}_{\text{rank}}^{\text{rel}}$ provides the primary supervision signal, while $\mathcal{L}_{\text{rank}}^{\text{abs}}$ acts as a regularizer to prevent score collapse.

Importantly, while the ranking losses propagate gradients to the backbone, they do not structurally conflict with the task objective. 
The ranking supervision purely encourages the halting scores to align with the relative difficulty already determined by the task loss. 
Consequently, this auxiliary supervision cooperates with the main objective, allowing the model to learn adaptive resource allocation without disrupting the representation geometry required for the task.


\section{Analysis}
\label{sec:analysis}

Given our focus on validating fundamental geometric dynamics rather than large-scale scaling behavior, we conduct our empirical analysis primarily on ImageNet-100~\cite{imagenet}, with complementary experiments on CIFAR-100~\cite{cifar}.
Our analysis focuses on confirming the fractal inductive bias, demonstrating the behavior of the ranking-based halting mechanism, and comparing the resulting dynamics against established baselines.
For the latent transition function ($A$), we consider ResNet~\cite{resnet} and Vision Transformer (ViT)~\cite{vit} architectures to cover a range of representative model classes.
Architectural details are provided in Appendix~\ref{appndx:architecture-details}.

\subsection{Scale-Consistency and Self-Similarity}
\label{ssec:analysis-fractal_consistency}

We analyze whether FROST learns a representation structure that is consistent across scales while simultaneously enriching fine-grained details, as predicted by the fractal inductive bias. 
Our analysis separates this phenomenon into two complementary aspects: (i) macroscopic scale-consistency, which examines whether intermediate representations preserve a shared structure across iterations, and (ii) microscopic self-similarity, which evaluates how much local complexity is encoded within this aligned structure.

\paragraph{Macroscopic scale-consistency.}
We first verify whether FROST learns a representation structure that is consistent across scales, as defined in Definition~\ref{def:scale-consistency}. 

\begin{table}[ht]
    \centering \scriptsize
    \newcommand{\std}[1]{\scriptsize{$\pm$#1}}
    \begin{tabular}{l|cccccc}
        \toprule
        Depth ($t$) & 1 & 3 & 6 & 9 & 12 & 15 \\
        \midrule
        Vanilla & 0.3843 & 0.4245 & 0.5104 & 0.6220 & 0.6558 & 0.8371 \\
                & \std{0.11} & \std{0.11} & \std{0.11} & \std{0.13} & \std{0.13} & \std{0.11} \\
        SSM     & 0.0539 & 0.3505 & 0.7144 & 0.9012 & 0.9767 & 0.9989 \\
                & \std{0.24} & \std{0.19} & \std{0.10} & \std{0.04} & \std{0.01} & \std{0.00} \\
        \textbf{FROST} & \textbf{0.2991} & \textbf{0.6969} & \textbf{0.9018} & \textbf{0.9684} & \textbf{0.9925} & \textbf{0.9996} \\
                & \std{0.19} & \std{0.12} & \std{0.04} & \std{0.01} & \std{0.00} & \std{0.00} \\
        \bottomrule
    \end{tabular}
    \caption{
        Mean and standard deviation of cosine similarity between the final representation $h_T$ and intermediate states $h_t$ over 10K samples. 
        Values closer to 1.0 indicate higher structural consistency with the final output.
        \vspace{-1em}
    }
    \label{tab:scale_consistency}
\end{table}

Table~\ref{tab:scale_consistency} reports the mean and standard deviation of cosine similarity over 10K samples. 
Vanilla baselines begin with moderately high similarity ($0.38$) but exhibit irregular progression, suggesting that early iterations often learn features directionally misaligned with the final representation. 
This inconsistency complicates the use of intermediate outputs for reliable anytime prediction. 
Standard SSMs, in contrast, show near-orthogonal initialization ($0.05$) with high variance ($\pm 0.24$), indicating unstable early dynamics and poor initial alignment.

FROST demonstrates the most desirable behavior. 
It starts with a moderate yet stable alignment ($0.30$) and increases similarity smoothly and consistently with depth while maintaining low variance. 
This monotonic progression indicates that successive iterations refine a shared representation geometry rather than altering its global structure, providing empirical evidence of scale-consistent refinement.

\paragraph{Microscopic self-similarity and complexity.}
While cosine similarity captures macroscopic alignment, it does not quantify how much structural detail is encoded within this aligned trajectory. 
To characterize the microscopic complexity of the latent dynamics, we estimate the Minkowski dimension $D$ of the latent trajectory~(Proposition~\ref{prop:fractal-inductive_bias}).

Both Vanilla and BasicSSM baselines exhibit dimensions of approximately $D \approx 1.262$, which is strikingly close to the theoretical dimension of a classical Koch curve ($D \approx 1.2619$). 
This suggests that, despite architectural differences, their dynamics are implicitly constrained to a relatively simple and well-known class of rough trajectories.

FROST, however, exhibits a significantly higher dimension ($D \approx 1.295$). 
The higher dimension implies that FROST learns a space-filling latent trajectory, enabling denser information packing within the same ambient dimensionality.

\subsection{Fractal Component Analysis}
\label{ssec:analysis-fractal_components}
We perform an ablation study on the key fractal parameters to understand their impact on model behavior.
\begin{figure}[ht]
    \centering
    \includegraphics[width=0.9\linewidth]{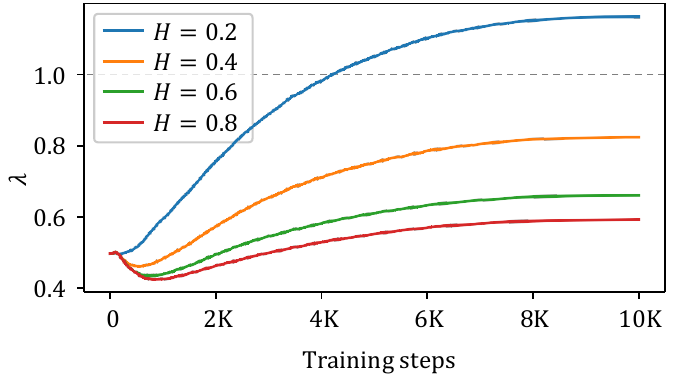}
    \vspace{-1em}
    \caption{
        Convergence behavior of $\lambda$ across different $H$.
        \vspace{-2em}
    }
    \label{fig:lambda}
\end{figure}

\paragraph{The $\lambda$ parameter.}
We investigate the learning dynamics of the contraction factor $\lambda$ (Definition~\ref{def:stationary_contraction}, $H=0.8$ in Figure~\ref{fig:lambda}). 
Initialized at $\lambda=0.5$, we observe a distinct two-phase convergence behavior.

In the first phase, $\lambda$ rapidly decreases to a minimum of $\lambda \approx 0.43$. 
This suggests the model initially prioritizes stability, taking smaller refinement steps to robustly learn the core representation manifold without large disruptions.
In the second phase, $\lambda$ gradually recovers and converges to an optimal value of $\lambda \approx 0.59$.
Crucially, throughout the entire training process, $\lambda$ remains strictly below 1.0, validating that the learned state transition naturally adheres to the contraction mapping required for stable fractal dynamics.

\paragraph{Hurst exponent ($H$).}
We observe an inverse correlation between $H$ and $\lambda$, where extreme roughness ($H=0.2$) drives $\lambda > 1.0$, prioritizing input reactivity over contraction. 
Conversely, treating $H$ as learnable causes it to diverge beyond $1.0$, exploiting the spectral bias to dampen high-frequency details ($B_\lambda \propto \lambda^{1+H}$). 
Thus, fixing $H$ is a critical structural constraint to prevent trivial oversmoothing and strictly enforce the fractal inductive bias.

\subsection{Halting Ability}
\label{ssec:analysis-halting}
We evaluate the ranking-based halting strategy described in \S\ref{sec:halting} using CIFAR-100 classification. 
Table~\ref{tab:haltability} shows the relationship between the target quantile $q$ (representing the proportion of samples allowed to exit) and the resulting average depth and accuracy. 

\begin{table}[ht]
    \centering \scriptsize
    \setlength{\tabcolsep}{1.15mm}
    \begin{tabular}{l|cccccccc}
        \toprule
        Quantile ($q$) & 12.50 & 25.00 & 37.50 & 50.00 & 62.50 & 75.00 & 87.50 & 100.00 \\ \midrule
        Depth ($t$) & 4.37 & 6.06 & 7.26 & 8.30 & 9.43 & 10.77 & 12.49 & 16.00 \\
        Accuracy (\%) & 45.39 & 52.40 & 55.78 & 57.72 & 59.47 & 60.70 & 61.77 & 61.99 \\
        \bottomrule
    \end{tabular}
    \caption{
        The depth and the CIFAR-100 classification accuracy across eight $q$-quantiles of $s$.
        \vspace{-1em}
    }
    \label{tab:haltability}
\end{table}

The results demonstrate a clear monotonic trend: as the quantile threshold increases (allowing harder samples to exit later), both the average depth and classification accuracy rise.
Notably, at lower quantiles ($q \le 25\%$), FROST achieves reasonable accuracy ($>52\%$) with significantly reduced depth ($t \approx 6.0$), effectively filtering out easy samples with minimal computation. 
Conversely, for the hardest subset ($q=100\%$), the model utilizes the full depth ($16.0$), achieving peak accuracy ($62\%$). 
This wide dynamic range in depth allocation confirms that our ranking-based mechanism successfully identifies sample difficulty and allocates computational resources adaptively, validating the efficiency claims of FROST.

\subsection{Loss Ablation}
\label{ssec:analysis-loss}
\begin{figure}[ht]
    \centering
    \newcommand{\figwidth}{0.325\linewidth}
    
    \begin{subfigure}{\figwidth}
    \includegraphics[width=\linewidth]{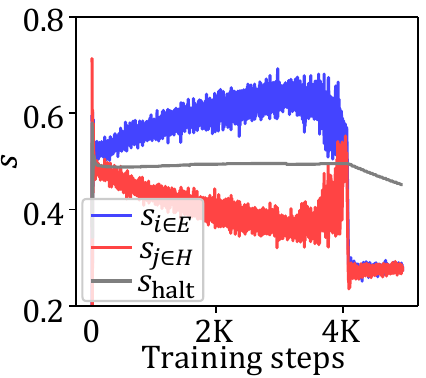}
    \caption{$\mathcal{L}_\text{rank}^\text{abs}$ only}
    \label{sfig:score-trajectory-abs}
    \end{subfigure}
    \begin{subfigure}{\figwidth}
    \includegraphics[width=\linewidth]{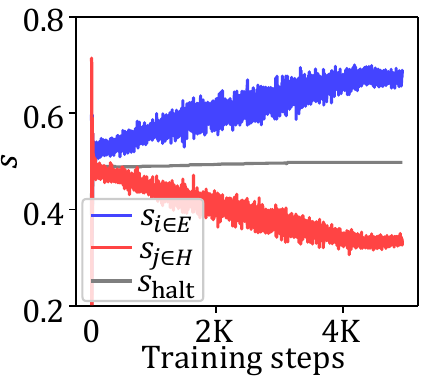}
    \caption{$\mathcal{L}_\text{rank}^\text{abs} + \mathcal{L}_\text{rank}^\text{rel}$}
    \label{sfig:score-trajectory-abs-rel}
    \end{subfigure}
    \begin{subfigure}{\figwidth}
    \includegraphics[width=\linewidth]{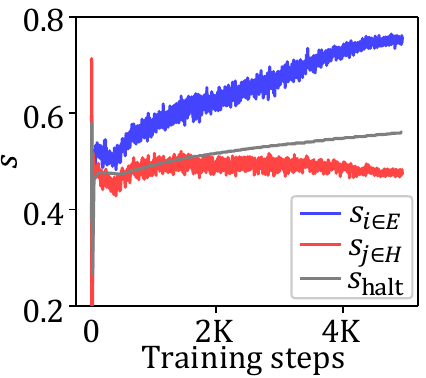}
    \caption{$\mathcal{L}_\text{rank}^\text{rel}$ only}
    \label{sfig:score-trajectory-rel}
    \end{subfigure}
    \caption{
        Convergence trajectory of $s$ varying $\mathcal{L}_\text{rank}$ composition.
        \vspace{-1.5em}
    }
    \label{fig:score-trajectory}
\end{figure}
We dissect the contribution of the ranking losses described in \S\ref{sec:training}. 
Figure~\ref{fig:score-trajectory} visualizes the convergence trajectory of halting scores during training for easy samples ($s_{i\in E}$), hard samples ($s_{j\in H}$), and the median halting threshold ($s_\text{halt}$).

As depicted in Figure~\ref{sfig:score-trajectory-abs}, relying exclusively on the absolute anchoring loss $\mathcal{L}_\text{rank}^\text{abs}$ forces the scores $s$ towards the extremes (0 or 1) rapidly in the initial steps. 
This suggests that the linear router mapping the hidden state to $s$ converges significantly faster than the complex backbone network, leading to a mutual collapse where the feature representation fails to mature (accuracy drops to $29.79\%$).

Conversely, Figure~\ref{sfig:score-trajectory-rel} demonstrates the limitation of using only the relative ranking loss $\mathcal{L}_\text{rank}^\text{rel}$. 
While it successfully separates the scores of easy ($s_{i \in E}$) and hard ($s_{j \in H}$) samples, it fails to utilize the full dynamic range $[0, 1]$, resulting in a compressed score distribution. 
This compression hinders fine-grained discrimination at inference time.

The combined loss, shown in Figure~\ref{sfig:score-trajectory-abs-rel}, strikes a stable balance: it maintains the correct relative order while effectively spreading the scores across the entire range. 
The quantitative impact is confirmed by CIFAR-100 classification. 
While the relative-only baseline achieves a competitive accuracy of $61.68\%$ at full depth ($q=1.0$), comparable to the combined model ($61.96\%$), the gap widens significantly in the early-exit regime. 
At a strict threshold of $q=0.25$, the combined objective outperforms the relative-only model by $11.20\%p$, demonstrating that utilizing the full score range is critical for reliable feature quality discrimination.

\subsection{Complexity Analysis}
\label{ssec:analysis-complexity}

We compare the computational complexity of ResNet and ViT-B/16 backbones in terms of parameter count, GFLOPs, and inference throughput~(Table~\ref{tab:complexity}).
\begin{table}[ht]
    \centering \scriptsize
    \setlength{\tabcolsep}{1.3mm}
    \begin{tabular}{llcccc}
        \toprule
        \multirow{2}{*}{Backbone}   & \multirow{2}{*}{Architecture} & Depth        & Params      & GFLOPs    & Throughput     \\
                                    &                               & \tiny{($t, \downarrow$)} & \tiny{(M, $\downarrow$)}  & (\tiny{$\downarrow$})  & \tiny{(img/s, $\uparrow$)} \\
        \midrule
                        ResNet50 & Vanilla   & 16.00 \textsubscript{$\pm$ 0.00} & 1.24  & 8.37  & 33.077 \\
                        ResNet50 & Recurrent & 16.00 \textsubscript{$\pm$ 0.00} & 0.11  & 8.37  & 32.898 \\
                        ResNet50 & BasicSSM  & 16.00 \textsubscript{$\pm$ 0.00} & 1.49  & 82.77 & 23.334 \\
        \rowcolor{Gray} ResNet50 & FROST     & 7.56  \textsubscript{$\pm$ 2.83} & 1.49  & 39.11 & 47.183 \\
        \midrule
                        ViT-B/16 & Vanilla   & 12.00 \textsubscript{$\pm$ 0.00} & 85.80 & 35.13 & 21.484 \\
                        ViT-B/16 & Recurrent & 12.00 \textsubscript{$\pm$ 0.00} & 7.83  & 35.13 & 21.976 \\
                        ViT-B/16 & BasicSSM  & 12.00 \textsubscript{$\pm$ 0.00} & 20.81 & 65.45 & 15.986 \\
        \rowcolor{Gray} ViT-B/16 & FROST     & 4.53  \textsubscript{$\pm$ 2.34} & 20.81 & 24.71 & 42.292 \\
        \bottomrule
    \end{tabular}
    \caption{
        Computational complexity comparison of recurrent baselines and FROST.
        ``Vanilla'' refers to the models that does not share parameters across depths.
        GFLOPs and throughput are measured under a fixed halting quantile ($q=0.5$) for adaptive methods.
        Throughput is measured on a single NVIDIA GeForce RTX 3090.
        \vspace{-2em}
    }
    \label{tab:complexity}
\end{table}

Applying the proposed ranking-based halting strategy to non-fractal baselines consistently resulted in unstable training and frequent collapse (Appendix~\ref{appndx:halting-collapse}). 
As a result, for Vanilla, Recurrent, and BasicSSM models, we report complexity metrics by assuming the average iteration number equals to the full depth.
Without the scale-consistent latent structure enforced by FROST, intermediate representations do not form valid approximations of the final output, causing the ranking signal to conflict with representation learning.

For ResNet50, the parameter count of FROST is relatively higher than that of the recurrent baseline, as the backbone itself is lightweight and the additional state-space components ($B$, $C$, and $D$) constitute a non-negligible fraction of the total FLOPs.
Nevertheless, ranking-based halting significantly reduces the average refinement depth of FROST from 16.0 to 7.56, leading to a substantial decrease in effective GFLOPs and a pronounced increase in inference throughput.
This result highlights that, even when parameter overhead is more visible, adaptive refinement enables FROST to demonstrate higher practical efficiency by reducing computation.

ViT-B/16 represents a more typical large-capacity setting, where the relative overhead of state-space components becomes marginal.
In this regime, FROST consistently improves all evaluated metrics compared to the Vanilla baseline, simultaneously reducing average depth, computational cost, and substantially increasing throughput.

These results highlight that the effectiveness of ranking-based adaptive computation critically depends on the underlying fractal representation geometry, and that FROST uniquely enables stable training and efficient inference under this halting strategy.

\subsection{Empirical Proxies}
\label{ssec:empirical-proxies}
While metrics such as cosine similarity and estimated Minkowski dimension may not constitute a complete characterization of fractal geometry, they provide practical and interpretable proxies for validating scale-consistent refinement in finite neural systems. 
Our empirical analysis focuses on these observables as necessary indicators of the theoretical properties derived in \S\ref{sec:frost}, rather than as exhaustive geometric descriptors.


\section{Conclusion}
\label{sec:conclusion}

In this work, we introduced FROST, a simple and principled framework that injects a fractal inductive bias into state-space models to enable variable-complexity inference via scale-consistent latent dynamics.
Empirical verification confirms that the proposed fractal inductive bias leads to the theoretically predicted convergence behaviors.

Beyond empirical gains, our results indicate that adaptive computation critically depends on the geometry of latent representations.
Constraining intermediate states to lie on a self-similar manifold provides a structural justification for early halting and highlights the importance of intrinsic representation consistency.

FROST currently relies on stationary transformations applied uniformly across tokens, which may limit expressivity for non-uniform or task-specific dynamics, motivating extensions to token-level adaptive refinement.
Applying the framework to larger-scale benchmarks also remains an important direction for evaluating the robustness and generality of the proposed geometric principles.



\bibliography{example_paper}
\bibliographystyle{icml2026}

\newpage
\appendix
\onecolumn

\section{Relationship with Efficient Methods}
\label{appndx:related-works}

\paragraph{Static vs. Adaptive Vision Backbones.}
Recent adaptations of state-space models to computer vision, such as Vision Mamba~\cite{vision-mamba} and VMamba~\cite{vmamba}, have demonstrated that SSMs can serve as effective general-purpose backbones.
However, these architectures typically adhere to a fixed columnar design, stacking distinct layers with independent parameters.
Consequently, they incur a constant inference cost for every input, treating simple and complex samples with an identical computational budget.

FROST diverges from this paradigm by employing a recursive, weight-tied formulation.
Instead of physically stacking layers, FROST iterates a stationary transformation to theoretically unbounded depths.
This structural difference allows FROST to dynamically adjust its computational expenditure, exiting early for easy instances while allocating more iterations for harder samples.
As a result, FROST breaks the rigid cost–performance trade-off of static backbones while retaining the modeling advantages of state-space models.

\paragraph{Adaptive Computation Time and Learnable Halting.}
Approaches such as Adaptive Computation Time (ACT)~\cite{act}, PonderNet~\cite{pondernet}, and LayerSkip~\cite{layerskip} formulate the halting decision as a policy learning problem. 
While effective for allocating computational resources based on input difficulty, these methods rely on an extrinsic halting policy trained via complex objectives (e.g., geometric priors or reinforcement learning). 
Crucially, they treat early stopping as a truncation of the computational graph. 
Consequently, intermediate states are often merely incomplete precursors to the final output, lacking a guarantee of semantic validity or distributional consistency with the fully processed representations.

In contrast, FROST relies on an intrinsic geometric constraint. 
Since the latent dynamics are governed by a contractive fractal mapping, all intermediate states reside on a shared, self-similar manifold. 
Halting in FROST corresponds to selecting a coarser resolution of a consistent representation rather than prematurely aborting an unfinished process. 
This allows for valid anytime predictions without the optimization overhead of auxiliary halting networks.

\paragraph{Mixture-of-Depths and Dynamic Routing.}
Dynamic routing frameworks, including Mixture-of-Depths (MoD)~\cite{mixture-of-depths}, achieve efficiency by selectively skipping blocks or tokens. 
In these architectures, varying the depth fundamentally alters the transformation path, often leading to qualitatively disjoint feature spaces (i.e., a layer-$N$ feature is distributionally distinct from a layer-$M$ feature). 
This necessitates specific routing logic to handle the discrepancy.

FROST reinterprets depth as iterative refinement rather than architectural routing. 
Because FROST employs a stationary transformation, successive iterations progressively sharpen the fidelity of the same underlying geometry. 
Thus, depth becomes a continuous control knob for representational precision, maintaining strict scale-consistency across all exit points.

\paragraph{Selective State Updates in SSMs.}
Recent efficient SSMs, such as Mamba-2~\cite{mamba2}, introduce selective scan mechanisms to skip or compress information along the sequence length dimension. 
These methods focus on token-level sparsity—deciding what information to propagate over time.

FROST addresses an orthogonal dimension of efficiency: computational density. 
It operates at the level of iterative recurrence, controlling how much processing is applied to the global state. 
Therefore, FROST's fractal refinement is complementary to token-selective mechanisms, and future work could potentially integrate both to achieve efficiency across both sequence length and computational depth.

\section{Theoretical Analysis}
\label{appndx:theory}

In this section, we provide rigorous proofs for the convergence, approximation error bounds, and gradient stability of the FROST framework.

\subsection{Proof of Convergence to a Unique Fixed Point}
\label{appndx:proof-convergence}

We establish that the iterative state update in FROST constitutes a contraction mapping, ensuring stable convergence.

\begin{definition}[FROST State Update]
Let $h_t \in \mathbb{R}^d$ be the hidden state at iteration $t$. We model the update rule as:
\begin{equation}
    h_{t+1} = \Phi(h_t, x) = \sigma(A_\lambda h_t + B x + b)
\end{equation}
where $\sigma$ is a 1-Lipschitz activation function (e.g., ReLU, Tanh).
\end{definition}

\noindent\textit{Note:} A function is said to be 1-Lipschitz if the distance between any two outputs is never larger than the distance between their corresponding inputs (i.e., $\|\sigma(u) - \sigma(v)\| \le \|u - v\|$), which intuitively means the function does not expand the signal energy.

\paragraph{Remark on Linearity and Generalization.}
In our practical implementation, the transition operator $A$ may involve non-linear gating mechanisms or input-dependent discretizations (as discussed in \S\ref{sec:frost}).
However, for the sake of theoretical tractability in this proof, we assume $A_\lambda$ represents the effective linearized transition matrix.
Note that this assumption is not restrictive; the contraction mapping principle holds for general non-linear operators as long as their global Lipschitz constant remains strictly less than 1.

\begin{theorem}[Contraction Mapping Principle]
If the spectral norm (or induced operator norm) of the transition matrix satisfies $\|A_\lambda\| < 1$, then the map $\Phi(\cdot, x)$ is a contraction with respect to $h$. 
By the Banach Fixed Point Theorem, the sequence $\{h_t\}$ converges to a unique fixed point $h^*$.
\end{theorem}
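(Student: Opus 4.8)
The plan is to reduce the statement to a direct application of the Banach Fixed Point Theorem, so the only substantive work is checking that $\Phi(\cdot, x)$ is a contraction on $\mathbb{R}^d$ with the Euclidean metric. First I would fix the input $x$ (it enters only as a constant) and take two arbitrary states $h, h' \in \mathbb{R}^d$. Setting $u = A_\lambda h + Bx + b$ and $u' = A_\lambda h' + Bx + b$, the affine term $Bx + b$ cancels in the difference, leaving $u - u' = A_\lambda(h - h')$. Using that $\sigma$ is $1$-Lipschitz, I obtain
\begin{equation}
\|\Phi(h,x) - \Phi(h',x)\| = \|\sigma(u) - \sigma(u')\| \le \|u - u'\| = \|A_\lambda(h-h')\| \le \|A_\lambda\|\,\|h - h'\|,
\end{equation}
so $\Phi(\cdot, x)$ is Lipschitz with constant $L := \|A_\lambda\|$.

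Next I would invoke the hypothesis $\|A_\lambda\| < 1$ to conclude $L \in [0,1)$, hence $\Phi(\cdot,x)$ is a genuine contraction. Since $(\mathbb{R}^d, \|\cdot\|)$ is complete, the Banach Fixed Point Theorem gives a unique $h^\ast$ with $\Phi(h^\ast, x) = h^\ast$, and for any initialization $h_0$ the iterates satisfy the a priori bound $\|h_t - h^\ast\| \le L^t\,\|h_0 - h^\ast\|$, so $h_t \to h^\ast$ at a geometric rate. This same estimate is what later feeds the approximation-error and gradient-stability analyses, so I would state it explicitly rather than merely asserting convergence.

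The one point requiring care — and the closest thing to an obstacle — is norm consistency. The operator norm $\|A_\lambda\|$ in the hypothesis must be the one induced by the vector norm used to metrize the state space (here the Euclidean norm), since the submultiplicative step $\|A_\lambda v\| \le \|A_\lambda\|\,\|v\|$ depends on that matching. If instead a non-induced quantity were used, e.g. the spectral radius $\rho(A_\lambda)$, then $\rho(A_\lambda) < 1$ only guarantees that some iterate $\Phi^{m}(\cdot,x)$ is a contraction; this still yields a unique fixed point and eventual geometric decay via the standard iterate refinement of Banach's theorem, but I would flag the distinction. Finally, I would note — consistent with the paper's remark — that linearity of $A$ is inessential: the identical estimate holds verbatim for any nonlinear transition operator with global Lipschitz constant strictly below $1$, since only that bound and the $1$-Lipschitzness of $\sigma$ are used.
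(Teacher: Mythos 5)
Your argument is correct and follows essentially the same route as the paper: both proofs fix $x$, cancel the constant affine term in the difference, apply the $1$-Lipschitz property of $\sigma$, and then use submultiplicativity of the induced operator norm to obtain Lipschitz constant $L = \|A_\lambda\| < 1$ before invoking Banach's fixed-point theorem. Your additional remarks on completeness of $(\mathbb{R}^d, \|\cdot\|)$, the explicit geometric bound $\|h_t - h^\ast\| \le L^t \|h_0 - h^\ast\|$, and the caveat distinguishing the induced operator norm from the spectral radius are accurate refinements of details the paper leaves implicit, but they do not constitute a different approach.
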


\begin{proof}
Consider two arbitrary states $h$ and $h'$. 
The distance between updated states is:
\begin{align}
    \| \Phi(h, x) - \Phi(h', x) \| &= \| \sigma(A_\lambda h + B x) - \sigma(A_\lambda h' + B x) \| \\
    &\le \| (A_\lambda h + B x) - (A_\lambda h' + B x) \| \quad (\text{by 1-Lipschitz property of } \sigma) \\
    &= \| A_\lambda (h - h') \| \\
    &\le \| A_\lambda \| \| h - h' \|
\end{align}
Let $L = \| A_\lambda \|$. 
In FROST, the parameterization guarantees $L \le \lambda < 1$. 
Thus, $\Phi$ is a contraction mapping, implying the existence of a unique limit $h^*$ such that $\lim_{t \to \infty} h_t = h^*$.
\end{proof}

\subsection{Approximation Error Bound and Early Exit}
\label{appndx:error-bound}

We derive the error bound for terminating the iteration at step $t$ (early exit or halt). 
This justifies our rationale that deeper iterations yield exponentially better approximations of the underlying manifold.

\begin{theorem}[Exponential Error Decay]
Let $h^*$ be the unique fixed point and $h_t$ be the state at iteration $t$. 
The approximation error satisfies:
\begin{equation}
    \| h_t - h^* \| \le \frac{L^t}{1-L} \| h_1 - h_0 \|
\end{equation}
where $L < 1$ is the contraction factor.
\end{theorem}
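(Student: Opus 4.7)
The plan is to adapt the standard a posteriori error bound associated with the Banach Fixed Point Theorem, using the contraction property already established in the previous theorem. Since $\Phi(\cdot, x)$ is $L$-Lipschitz in $h$ with $L < 1$, the scheme falls directly into the classical framework, and the result is essentially a telescoping estimate.

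First I would bound the distance between consecutive iterates. Applying the contraction property $k$ times gives
\begin{equation}
\| h_{k+1} - h_k \| = \| \Phi(h_k, x) - \Phi(h_{k-1}, x) \| \le L \| h_k - h_{k-1} \| \le \cdots \le L^{k} \| h_1 - h_0 \|.
\end{equation}
Next I would use the triangle inequality to telescope across an arbitrary number of future steps $m > t$:
\begin{equation}
\| h_m - h_t \| \le \sum_{k=t}^{m-1} \| h_{k+1} - h_k \| \le \| h_1 - h_0 \| \sum_{k=t}^{m-1} L^{k}.
\end{equation}
Bounding the finite geometric sum by the tail of the infinite geometric series (which converges since $L<1$) yields
\begin{equation}
\| h_m - h_t \| \le \frac{L^t}{1-L} \| h_1 - h_0 \|.
\end{equation}

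Finally, I would pass to the limit $m \to \infty$. Since the previous theorem guarantees $h_m \to h^{*}$, and the norm is continuous, the left-hand side converges to $\| h^{*} - h_t \|$, while the right-hand side is independent of $m$, giving the desired bound.

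There is no substantive obstacle here; the argument is essentially bookkeeping on top of the Banach contraction property. The only point that warrants explicit care is the limit step, where one must justify interchanging the limit with the norm (immediate by continuity) and ensure that the sum is bounded uniformly in $m$ before letting $m \to \infty$, rather than attempting to manipulate an already-infinite series of iterates that was never constructed.
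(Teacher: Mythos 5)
Your proposal is correct and follows essentially the same route as the paper's proof: bound consecutive increments by $L^k\|h_1 - h_0\|$ via the contraction property, telescope with the triangle inequality, dominate the finite geometric sum by the infinite tail, and pass to the limit $m\to\infty$ using the convergence $h_m \to h^*$. Your closing remark about interchanging the limit with the norm is a reasonable point of care, but it adds no new content beyond what the paper implicitly invokes.
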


\begin{proof}
From the contraction property shown in \S\ref{appndx:proof-convergence}, we have $\| h_{t+1} - h_t \| \le L \| h_t - h_{t-1} \|$. 
By induction, $\| h_{t+1} - h_t \| \le L^t \| h_1 - h_0 \|$.
Using the triangle inequality for $m > t$:
\begin{align}
    \| h_m - h_t \| &\le \sum_{k=t}^{m-1} \| h_{k+1} - h_k \| \\
    &\le \sum_{k=t}^{m-1} L^k \| h_1 - h_0 \| \\
    &= L^t \left( \sum_{j=0}^{m-t-1} L^j \right) \| h_1 - h_0 \| \\
    &\le L^t \left( \sum_{j=0}^{\infty} L^j \right) \| h_1 - h_0 \| = \frac{L^t}{1-L} \| h_1 - h_0 \|
\end{align}
Taking the limit as $m \to \infty$, we get $h_m \to h^*$, yielding the bound:
\begin{equation}
    \| h^* - h_t \| \le \mathcal{O}(L^t)
\end{equation}
\end{proof}
\paragraph{Interpretation.} The error decays exponentially with depth $t$. 
This guarantees that even with early exiting strategies, the representation $h_t$ is structurally consistent with the optimal representation $h^*$, merely at a coarser resolution of precision.

\subsection{Gradient Dynamics and Stability Analysis}
\label{appndx:gradient-stability}

Finally, we analyze the backward pass dynamics. 
A common issue in recurrent networks is the exploding gradient problem. We show that FROST's contractive nature inherently prevents gradient explosion.

\begin{proposition}[Bounded Gradients]
The gradient of the state at step $t$ with respect to the initialization $h_0$ is bounded and decays over time, ensuring stability.
\end{proposition}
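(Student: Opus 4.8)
The plan is to differentiate the recurrence $h_{t+1} = \Phi(h_t, x) = \sigma(A_\lambda h_t + Bx + b)$ through the chain rule and then telescope the resulting product of Jacobians. First I would write the one-step Jacobian as $J_k := \partial h_{k+1}/\partial h_k = \operatorname{diag}(\sigma'(\cdot))\, A_\lambda$, and note that because $\sigma$ is $1$-Lipschitz we have $\|\operatorname{diag}(\sigma'(\cdot))\| \le 1$ in the induced operator norm, hence $\|J_k\| \le \|A_\lambda\| = L$ for every $k$, uniformly in the argument. This is exactly the same estimate that drives the contraction argument in \S\ref{appndx:proof-convergence}, now applied to the linearization rather than to the map itself.

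Next I would expand the full sensitivity by the chain rule as a product,
\begin{equation}
\frac{\partial h_t}{\partial h_0} = \prod_{k=0}^{t-1} J_k = J_{t-1} J_{t-2} \cdots J_0,
\end{equation}
and apply submultiplicativity of the operator norm:
\begin{equation}
\left\| \frac{\partial h_t}{\partial h_0} \right\| \le \prod_{k=0}^{t-1} \|J_k\| \le L^t.
\end{equation}
Since $L \le \lambda < 1$ by the FROST parameterization, this quantity is bounded above by $1$ for all $t$ and decays geometrically, which is precisely the "bounded and decays over time" claim. To package it as a gradient bound on a loss $\mathcal{E}$ evaluated at step $t$, I would then write $\partial \mathcal{E}/\partial h_0 = (\partial h_t/\partial h_0)^\top (\partial \mathcal{E}/\partial h_t)$ and conclude $\|\partial \mathcal{E}/\partial h_0\| \le L^t \|\partial \mathcal{E}/\partial h_t\|$, so no backward pass can amplify the upstream gradient — exploding gradients are structurally impossible.

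The only real subtlety — and the step I expect to need the most care — is the treatment of $\sigma'$. For ReLU, $\sigma'$ is $0$ or $1$ almost everywhere, so $\|\operatorname{diag}(\sigma')\| \le 1$ holds trivially (with a standard remark that we work with a subgradient at the measure-zero kink set); for Tanh or sigmoid, $|\sigma'| \le 1$ pointwise so the diagonal-matrix norm bound is immediate. I would state this as a one-line lemma ("for any $1$-Lipschitz $\sigma$ with a.e. derivative, $\|\operatorname{diag}(\sigma'(u))\|_2 \le 1$") and invoke it uniformly, mirroring the "$1$-Lipschitz" hypothesis already used for the forward direction, so that the proposition's proof is a two-line consequence of submultiplicativity. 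A secondary point worth one sentence is that the bound is independent of the input $x$ and of the particular trajectory, since the per-step estimate $\|J_k\|\le L$ holds at every point of $\mathbb{R}^d$; this is what makes the stability claim global rather than merely local.
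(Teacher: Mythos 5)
Your proof is correct and takes essentially the same route as the paper: express $\partial h_t/\partial h_0$ as a product of one-step Jacobians $\operatorname{diag}(\sigma'(\cdot))\,A_\lambda$, bound each factor by $L = \|A_\lambda\| < 1$ via submultiplicativity and the $1$-Lipschitz property of $\sigma$, and conclude geometric decay $L^t$. Your added remarks (the ReLU a.e.-derivative caveat, the loss-gradient corollary, and trajectory-independence of the bound) are sensible elaborations but do not change the argument.
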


\begin{proof}
The Jacobian of the update rule at step $k$ is $J_k = \frac{\partial h_k}{\partial h_{k-1}} = \text{diag}(\sigma'(z_k)) W_\lambda$, where $z_k$ is the pre-activation vector.
The total Jacobian over $t$ steps is the product of individual Jacobians:
\begin{equation}
    \frac{\partial h_t}{\partial h_0} = \prod_{k=1}^{t} J_k = \prod_{k=1}^{t} \text{diag}(\sigma'(z_k)) W_\lambda
\end{equation}
Taking the spectral norm:
\begin{align}
    \left\| \frac{\partial h_t}{\partial h_0} \right\| &\le \prod_{k=1}^{t} \| \text{diag}(\sigma'(z_k)) \| \| W_\lambda \| \\
    &\le \prod_{k=1}^{t} 1 \cdot L \quad (\text{assuming } |\sigma'| \le 1 \text{ and } \|W_\lambda\| = L) \\
    &= L^t
\end{align}
Since $L < 1$, the influence of the initial state decays as $t \to \infty$. 
More importantly for training stability, the gradient norm is strictly bounded by $1$ (assuming normalized initialization), preventing the exploding gradient problem ($L > 1$) common in standard recurrent neural networks. 
This ensures that the learning signal remains stable across varying depths.
\end{proof}

\section{Architecture Details}
\label{appndx:architecture-details}

\paragraph{Shape-Consistency.} 
FROST expects the state transition function $A$ does not change the shape of output for stationary property. 
However, standard ResNet~\cite{resnet} includes pooling layers to widen the receptive fields.
To solve this mismatch, we slightly modified ResNet implementation; we we replaced max-pooling layer with dilated convolution with padding.
This still widen the receptive fields, and keep the shape consistent. 

\paragraph{$\lambda$ Range.}
$\lambda$ takes the role of scale factor, making the whole system contractive. 
Thus, the $\lambda$ parameter must be kepted positive. 
To this end, we reparameterized $\lambda$ with $\exp$-$\log$-$\lambda$.
This trick keeps $\lambda$ positive and the FROST system contractive while preventing the value explosion to the negatives.

\paragraph{Linearization of $B, C, D$.}
In \S\ref{sec:frost}, we assume the linearity of $B, C$, and $D$, so that the $B_\lambda(x) = \lambda^{1+H}$ and $D_\lambda(h) = D(h)$.
We implemented $B$ and $D$ as $1\times1$ convolution layers.
This implementation not only makes the practice close to the threory, but also saves computational complexity for efficiency. 

\paragraph{Gating Mechanism.}
Our expansion in Eq.~\ref{eq:discrete_realization} is can be interpreted as interpolation between $h_t$ and the residual term.
This can be efficiently implemented with linear mapping (e.g., $1\times1$ convolution) and sigmoid function.
This practical implementation also keep the system efficient and effective.

\section{Collapse with Ranking-based Halting}
\label{appndx:halting-collapse}

We analyze the instability observed when applying the proposed ranking-based halting strategy to non-fractal architectures, including standard recurrent model and BasicSSM.
Although ranking-based halting is designed to align computation allocation with task loss, we empirically found that these baselines consistently failed to converge across multiple seeds when trained with the same halting objective used for FROST.

The primary cause of this collapse lies in the mismatch between the halting signal and the structure of intermediate representations.
In non-fractal models, intermediate states produced at different iterations do not lie on a shared distribution and cannot be interpreted as progressively refined approximations of the final representation.
Consequently, enforcing a ranking constraint across iterations introduces conflicting optimization signals: the ranking loss encourages states to be ranked above later ones for certain samples, while the task loss simultaneously pushes later states to be strictly more informative.
This conflict leads to unstable gradients and rapid divergence during training.

In contrast, FROST enforces scale-consistent latent dynamics through its fractal inductive bias, ensuring that all intermediate states reside on a self-similar manifold.
Under this constraint, ranking-based halting no longer contradicts representation learning, as earlier and later states correspond to different resolutions of the same underlying representation.
As a result, the ranking objective reinforces, rather than interferes with, the primary task loss, enabling stable training and effective adaptive computation.

Due to this instability, we report complexity metrics for recurrent and BasicSSM baselines under fixed-depth execution, assuming the maximum number of iterations.
This analysis highlights that the effectiveness of ranking-based adaptive computation critically depends on the geometric consistency of latent representations, which is uniquely provided by FROST.

To rigorously explain the empirical collapse observed in baselines, we analyze the alignment between the gradients derived from the primary task loss ($\mathcal{L}_{\text{task}}$) and the ranking loss ($\mathcal{L}_{\text{rank}}$).

\begin{definition}[Gradient Conflict]
Let $\theta$ be the shared parameters of the model. 
We define the gradient conflict between the task objective and the ranking objective at iteration $t$ as the cosine similarity between their gradients:
\begin{equation}
    \mathcal{C}(\theta) = \frac{\nabla_\theta \mathcal{L}_{\text{task}} \cdot \nabla_\theta \mathcal{L}_{\text{rank}}}{\| \nabla_\theta \mathcal{L}_{\text{task}} \| \| \nabla_\theta \mathcal{L}_{\text{rank}} \|}
\end{equation}
If $\mathcal{C}(\theta) < 0$, the objectives are competing, leading to destructive interference and training instability.
\end{definition}

\paragraph{Case 1: Non-Fractal Architectures (Sequential Feature Extraction).}
In standard stacked architectures (e.g., Vanilla, standard SSMs), early layers are optimized to extract low-level features (e.g., edges, textures) that serve as precursors for later layers, not to represent the final semantic output.
Let $h_t$ be the state at layer $t$. 
The task loss backpropagated from the final layer $T$ encourages $h_t$ to maximize mutual information with the next layer's optimal input, denoted as vector $\mathbf{v}_\text{next}$.
Conversely, the ranking loss forces $h_t$ to directly classify the target $y$, pushing $h_t$ towards a semantic classification vector $\mathbf{v}_\text{class}$.
In general hierarchical models, $\mathbf{v}_\text{next} \not\approx \mathbf{v}_\text{class}$ (feature space $\neq$ logit space).
Therefore, $\nabla_{h_t} \mathcal{L}_{\text{task}}$ and $\nabla_{h_t} \mathcal{L}_{\text{rank}}$ point in significantly different directions, often yielding $\mathcal{C}(\theta) \ll 0$. 
This gradient conflict destabilizes the shared weights, causing the collapse observed in our experiments.

\paragraph{Case 2: FROST (Scale-Consistent Refinement).}
FROST is contractive, meaning all states $h_t$ lie in the basin of attraction of a unique fixed point $h^*$ which represents the optimal semantic representation.
\begin{equation}
    h_t = h^* + \epsilon_t, \quad \text{where } \|\epsilon_t\| \to 0 \text{ as } t \to \infty.
\end{equation}
Both the task loss (at $t \to \infty$) and the ranking loss (at step $t$) share the same global objective: minimizing the distance to the target manifold defined by $h^*$.
Improving the classification capability of $h_t$ (via $\mathcal{L}_{\text{rank}}$) inherently reduces $\|\epsilon_t\|$, which is congruent with the trajectory required by $\mathcal{L}_{\text{task}}$.
Consequently, $\nabla_\theta \mathcal{L}_{\text{task}}$ and $\nabla_\theta \mathcal{L}_{\text{rank}}$ are structurally aligned ($\mathcal{C}(\theta) > 0$), allowing the ranking objective to act as a valid auxiliary supervision rather than noise.

\section{The Use of Large Language Models}
\label{appndx:use-of-llms}

As per the ICML 2026 guidelines on the use of Large Language Models (LLMs), we disclose that an LLM was used for minor grammar corrections and polishing of the text to enhance readability and for searching related research to broaden the scope of the literature review. 
The LLM did not contribute to the research ideation, methodology, or core findings of the paper.


\end{document}